\newtheorem{proposition}{Proposition}
\newtheorem{defination}{Defination}
\newtheorem{remark}{Remark}
\newtheorem{proof}{Proof}
\begin{document}
%
\title{Poisson Conjugate Prior for PHD Filtering based Track-Before-Detect Strategies in Radar Systems}



\author{\IEEEauthorblockN{Haiyi Mao,
Cong Peng,
Yue Liu,
Jinping Tang,
Hua Peng and
Wei Yi
}
\IEEEauthorblockA{School of Information and Communication Engineering, \\
University of Electronic Science and Technology of China,
Chengdu, China \\
Email: maohaiyi@std.uestc.edu.cn,
panycg\_cn@163.com,
rerrain521@gmail.com,  \\
jptang@std.uestc.edu.cn,
202121010920@std.uestc.edu.cn,
kussoyi@gmail.com}
}



\maketitle

\begin{abstract}
A variety of filters with track-before-detect (TBD) strategies have been developed and applied to low signal-to-noise ratio (SNR) scenarios, including the probability hypothesis density (PHD) filter.
Assumptions of the standard point measurement model based on detect-before-track (DBT) strategies are not suitable for the amplitude echo model based on TBD strategies.
However, based on different models and unmatched assumptions, the measurement update formulas for DBT-PHD filter are just mechanically applied to existing TBD-PHD filters.
In this paper, based on the Kullback-Leibler divergence minimization criterion, finite set statistics theory and rigorous Bayes rule, a principled closed-form solution of TBD-PHD filter is derived.
Furthermore, we emphasize that PHD filter is conjugated to the Poisson prior based on TBD strategies.
Next, a capping operation is devised to handle the divergence of target number estimation as SNR increases.
Moreover, the sequential Monte Carlo implementations of dynamic and amplitude echo models are proposed for the radar system.
Finally, Monte Carlo experiments exhibit good performance in Rayleigh noise and low SNR scenarios.
\end{abstract}

\begin{IEEEkeywords}
Track-before-detect, Probability hypothesis density filter, Kullback-Leibler divergence, Conjugate prior, Sequential Monte Carlo implementation.
\end{IEEEkeywords}

%
\IEEEpeerreviewmaketitle

\section{Introduction}
Multi-target detection and tracking with a low signal-to-noise ratio (SNR) is always a difficult problem for radar systems \cite{lowSNR2022}.
Under this circumstance, the traditional detect-before-track (DBT) strategy based on threshold will result in poor tracking performance.
Instead, raw data from echo is directly used to establish likelihood functions with track-before-detect (TBD) strategies \cite{Jiang2016kbTBD}.
This significantly reduces information loss and improves tracking performance, whose implementations include batch processing \cite{Yi2020tbdManeuvering} and particle filter (PF) \cite{Salmond2001pfTBD, Boers2004pfTBD, Ubeda2017pfTBD}.

Generally, a complete Bayes recursive filter is computationally intractable \cite{BarShalom2004estimation, Blackman1986radar}.
Recently, Mahler develops finite set statistics (FISST) theory to jointly estimate the number and states of targets \cite{Mahler2007FISST}.
Multi-target tracking based on RFS includes PHD \cite{Mahler2003firstMoment, Vo2005SMCPHD, Punithakumar2008immPHD, Vo2009immPHD}, CPHD \cite{Vo2007CPHD, Lundquist2013cphdExtendedTarget, Lundgren2013cphdSpawn}, MB \cite{Chai2020tbdMB} filters, etc.
PHD filter propagates the first-order statistical moment of the multi-target recursive Bayes nonlinear filter, which approximates the number of targets to the Poisson distribution.
It is computationally efficient and omits data associations.

In existing TBD-PHD filters, the updater is a direct extension from the standard point measurement model to the TBD model without rigorous Bayes derivation.
Although lots of research of TBD-PHD filter applied to radar systems have been published and applied to weak target tracking \cite{Punithakumar2005earliesBKPHD, BKPHD1, BKPHD4, BKPHD6, BKPHD9, BKPHD11, BKPHD12}, infrared maneuvering dim target tracking \cite{tbdPHD2012InfraredManeuvering}, image tracking \cite{image2016Comments}, smoother designing \cite{tbdPHD2016smoother}, and MIMO radar systems \cite{BKPHD2012SP}.
Most of them are improvements of the earlies one \cite{Punithakumar2005earliesBKPHD}.
An important one of these publications is the research from Habtemariam et al. \cite{BKPHD2012SP}.
It should be emphasized that different from DBT strategies, TBD is a non-threshold method.
Yet the formulas and assumptions of DBT-PHD are not suitable for TBD strategies.
Furthermore, neither rigorous Bayes prediction and update derivation nor approximation theory of recursive Bayes nonlinear filter is demonstrated in the above studies.

In this paper, a novel TBD-PHD filter is proposed in this paper, which is the best Poisson approximation based on Kullback-Leibler divergence (KLD) divergence minimization criteria in an information-theoretic sense.
In this paper, all assumptions of DBT and TBD strategies are enumerated and compared.
Furthermore, an analytical and principled close-form of TBD-PHD filter is derived based on Bayes rules.
Besides, Poisson approximation is implicitly executed after DBT-PHD updating in Fig. \ref{fig_flowchart}.
Contrasted with DBT-PHD filter, due to the Poisson conjugate prior for TBD-PHD filter which is proven in this paper and enables easier and more efficient computation, the approximation of the cardinality distribution is not required.
Additionally, the capping operation after updating is designed.
Eventually, the SMC implementation is devised for the radar system.

\begin{figure}[hbtp]
\centering
	\begin{minipage}[t]{0.445\textwidth}
		\centering
		\includegraphics[width=1.0\textwidth]{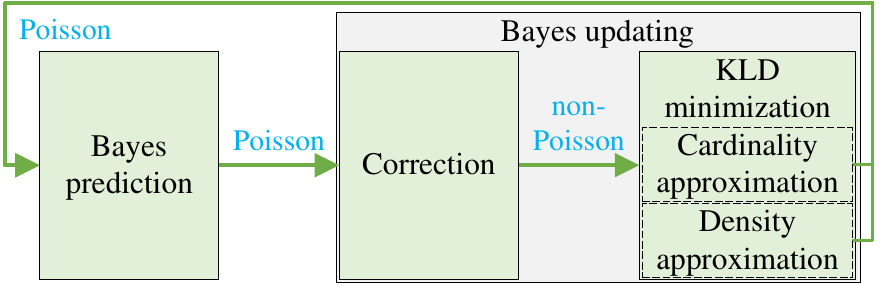}\\
		\centering{(a)}\\
	\end{minipage}%
	
	\begin{minipage}[t]{0.445\textwidth}
		\centering
		\includegraphics[width=1.0\textwidth]{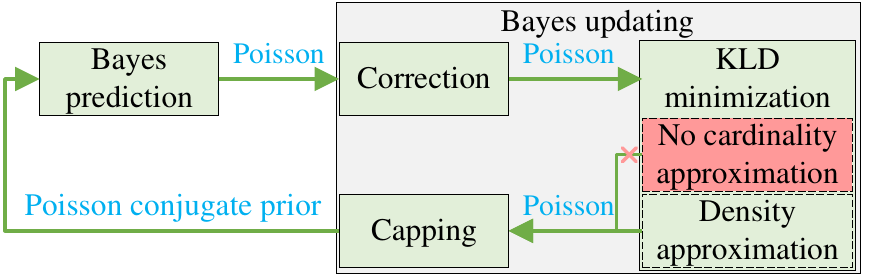}\\
		\centering{(b)}\\
	\end{minipage}

	\centering{
\caption{Flowchart of PHD filtering based KLD minimization criteria with 2 different strategies: (a) PHD filtering based on DBT strategies, (b) PHD filtering based on TBD strategies.} \label{fig_flowchart}
}
\end{figure}

\section{Models and notations}

\subsection{Dynamic model}

Assume that the target motions are independent of each other. The dynamic model of a single target is
\begin{equation}
{{x}_k} = f\left( {{{x}_{k - 1}},{v_k}} \right),
\end{equation}
where the state is ${{x}_k} = \left[ {p_k^x,\dot p_k^x,p_k^y,\dot p_k^y} \right]^{\rm{T}} \in {\mathbb{R}^{4}}$, and process noise is ${v_k}$.
Here, transpose is ${\left[  \cdot  \right]^{\rm{T}}}$.
Positions are ${p_k^x}, {p_k^y}$.
Velocities are ${\dot p_k^x}, {\dot p_k^y}$.
The multi-target state set ${X_k} = \left\{ {{x_{k,1}}, \cdots ,{x_{k,n}}} \right\} \in {\cal F}\left( {\mathbb{R}^{4}} \right)$ is an instance of RFS $\Xi_k$, where ${\cal F}\left( {{\mathbb{R}^{{4}}}} \right)$ denotes multiple-target state space.

\subsection{Observation model}

\subsubsection{Standard point measurement model for DBT strategies}

The PHD filter based on DBT strategies presumes a standard point measurement model.
Point measurement ${z_k} = {\left[ {{r_k},{\theta _k}} \right]^{\rm{T}}} \in {\mathbb{R}^{{2}}}$ consists of range-bearing as shown in Fig. \ref{figSubPointMea}.
The measurement set ${Z_k} = \left\{ {{z_{k,1}}, \cdots ,{z_{k,m}}} \right\} \in {\cal F}\left( {\mathbb{R}^{2}} \right)$ is an instance of RFS $\Sigma$, where
${\cal F}\left( {{\mathbb{R}^{{2}}}} \right)$ denotes the whole observation space for DBT strategies.
The measurement RFS  collected by the sensor is $\Sigma  = \Upsilon \left( X \right) \cup C$, where $\Upsilon(X)$ is generated by the targets, $C$ is Poisson clutter RFS.
The average number of clutters per unit volume is $\lambda_c$, the spatial distribution is $c(z)$, the volume of field-of-view (FOV) is $V$, and the intensity of Poisson clutter is $\kappa(z)=V \lambda_c  c(z)$.

Assumptions:
\begin{enumerate}[U1.]
  \item\label{item_U1} No target occupies more than one pixel.
  \item\label{item_U2} Each pixel is occupied by no more than one target.
  \item\label{item_U3} The measurements generated by targets and clutters are independent of each other.
  \item\label{item_U4} The target generates a point measurement with detection probability $p_D(x)$, otherwise the detection is missing with probability $1-p_D(x)$.
\end{enumerate}

\subsubsection{Amplitude echo model for TBD strategies}

Amplitudes of echoes are stored in an array with a fixed number $m$ based on TBD strategies, i.e., ${Z_k} = \left[ {{z_{k,1}}, \cdots, {z_{k,m}}} \right]$, which indicates fixed $m$ pixels.
Here, each ${z_k} = \left[ {{r_k},{\theta _k},{A_k}} \right]^{\rm{T}} \in {\mathbb{R} ^{{3}}}$ is composed of range-bearing-amplitude.
Therefore, the measurement set is not modeled as RFS.
In other words, there is no measurement RFS $\Upsilon \left( X \right)$ and clutter RFS $C$.
TBD-based updater directly uses the amplitude of the raw echo data as exhibited in Fig. \ref{figSubRawData} instead of a set of point measurements by passing a defined threshold.

\begin{figure}[!tbp]
    \center

    \subfloat[b][Point measurements\label{figSubPointMea}]{  
		\includegraphics[width=0.23\textwidth]{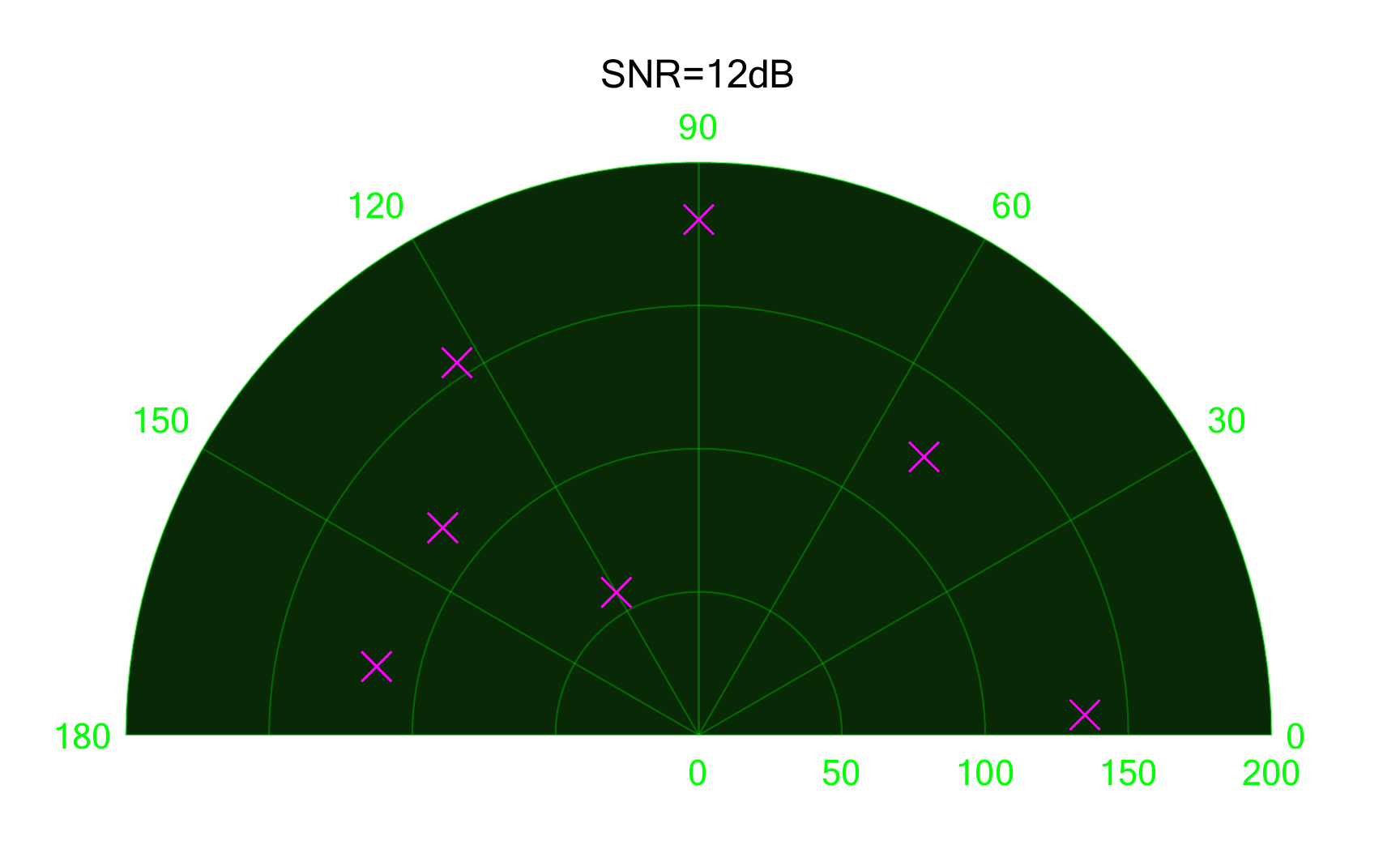}}
    \subfloat[b][Amplitudes of echoes\label{figSubRawData}]{
		\includegraphics[width=0.23\textwidth]{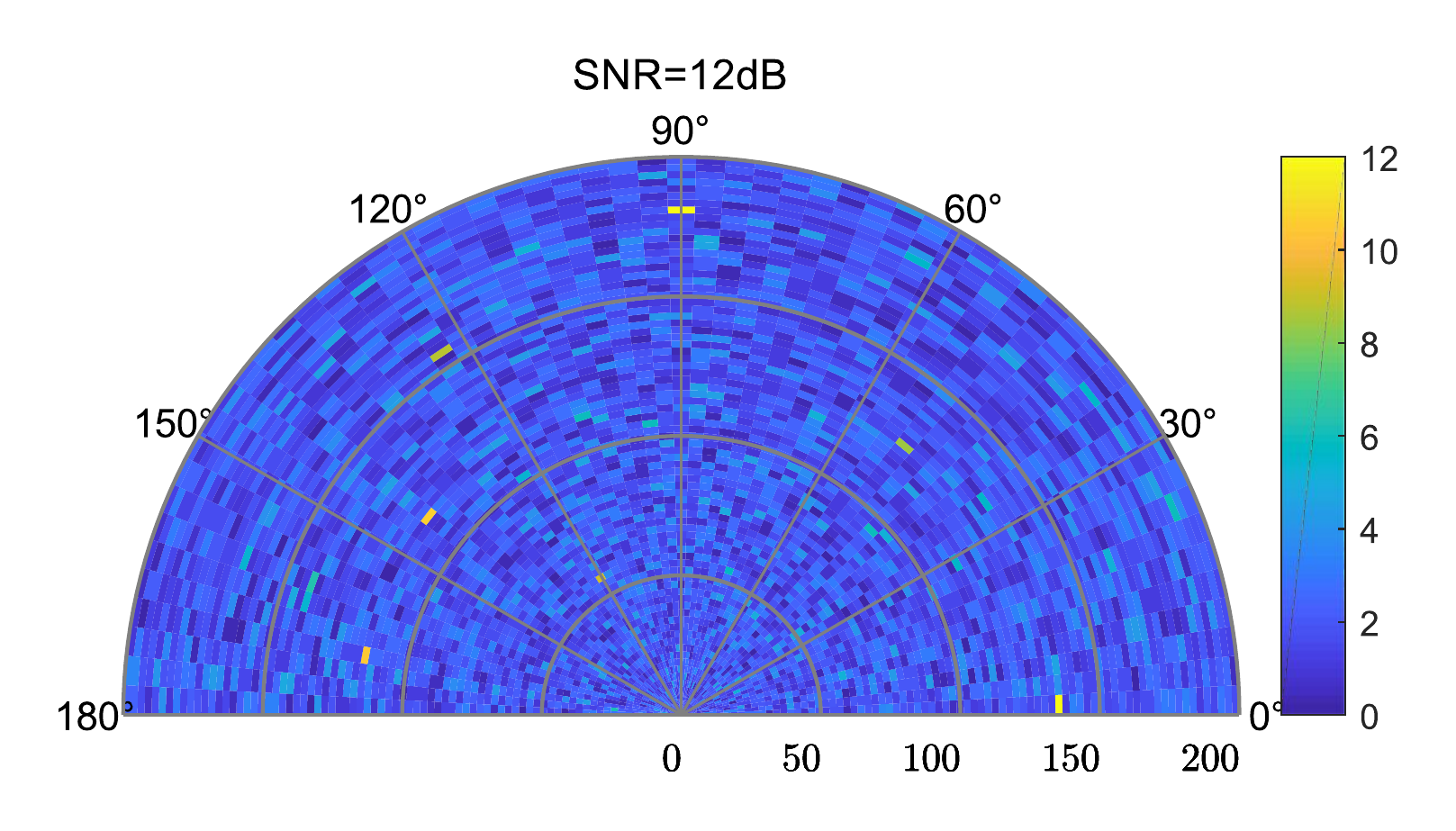}}
    \caption{The measurement models based DBT and TBD strategies when SNR=12dB.
    (a) Point Measurements are obtained by a constant false alarm rate (CFAR) detector with $p_{\rm{FA}}=10^{-4}$.
    (b) The amplitude of raw measurement data obeys Rayleigh distribution with the parameter $\sigma_n^2=1.5$. \label{figMeasurementModel}}

\end{figure}

Assumptions:
\begin{enumerate}[P1.]
  \item\label{item_P1} The measurements generated by targets and noise are independent mutually.
  \item\label{item_P2} The targets are non-spread point targets, and each target occupies only one pixel.
  \item\label{item_P3} The set of pixel $i$ illuminated by the target $x$ is denoted as $T(x)$.
  The spatial position is denoted by ${[ {r_k^{\left( i \right)},\theta _k^{\left( i \right)}} ]^{\rm{T}}}$, the amplitude is denoted by $A_k^{(i)}$, and they are conditionally independent of the target $x$.
  \item\label{item_P4} The targets are far apart, i.e., $x\neq x'$ leading to $T\left( x \right) \cap T\left( {{x'}} \right) = \emptyset $.
  \item\label{item_P5} Multi-target likelihood $g\left( {{Z_{k}}|X} \right)$ is separable.
\end{enumerate}

\section{Problem statement}

Not considering spawning targets,
the closed-form solution of the predictor and updater of PHD filter based on DBT strategies is \cite{Vo2006GMPHD}
\begin{equation}
{v_{k\mid k - 1}}(x) = \int {{p_{S,k}}(\xi ){f_{k\mid k - 1}}(x|\xi ){v_{k - 1}}(\xi )d\xi }  + {\gamma _k}(x),
\end{equation}
\begin{equation}
\begin{aligned}
{v_k}(x) &= \left[ {1 - {p_{D,k}}(x)} \right]{v_{k|k - 1}}(x)\\
& + \sum\limits_{{z_k} \in {Z_k}} {\frac{{{p_{D,k}}(x)\ell ({z_k}|x){v_{k|k - 1}}(x)}}{{\kappa ({z_k}) + \int {{p_{D,k}}(\zeta )\ell ({z_k}|\zeta ){v_{k|k - 1}}(\zeta )d\zeta } }}}.
\end{aligned}
\end{equation}

In summary, different from DBT strategies, the TBD-based filter shouldn't define detection probability $p_D$, missed detection $1-p_D$, false alarm process $C$, Poisson clutter mean $\lambda_c$, clutter spatial density $c(z)$, clutter intensity $\kappa(z)$.
In other words, they shouldn't appear in the derivation and result of the TBD-PHD filter.

\begin{remark}
On the contrary, simply setting $p_D\left(x\right)=1$ while retaining $\lambda_c, {c\left(z\right)}, \kappa(z)$ in the updater of existing TBD-PHD filters \cite{Punithakumar2005earliesBKPHD, BKPHD1, BKPHD4, BKPHD6, BKPHD9, BKPHD11, BKPHD12, tbdPHD2012InfraredManeuvering, image2016Comments, tbdPHD2016smoother, BKPHD2012SP}, i.e., \cite[equation (25)]{BKPHD2012SP}
\begin{equation}\label{equ_wrongTBDphdUpdater}
{v_k}(x) = \sum\limits_{{z_k} \in {Z_k}} {\frac{{g({z_k}|x){v_{k|k - 1}}(x)}}{{{\kappa}(z_k) + \int {g({z_k}|\xi ){v_{k|k - 1}}(\xi )d\xi } }}}
\end{equation}
violates the assumptions above.
In other words, measurement update formulas for DBT-PHD filter are just mechanically and unreasonably applied to TBD-PHD filters.
For instance, the presence of the clutter term ${\kappa}(z_k)$ in (\ref{equ_wrongTBDphdUpdater}) is unreasonable.
Therefore, a principled PHD filter with TBD strategies is eager to be developed based on rigorous Bayes rules.
\end{remark}

\section{TBD-PHD filter derivation based on KLD minimization}

\subsection{Apply KLD minimization to PHD filter}\label{Sec_Appendix_KLD}

PHD filter implicitly performs KLD minimization in the updating step to obtain the best Poisson approximation of the multi-target density, recursively.
\'Angel and Vo derived the DBT-PHD/CPHD filter based on KLD minimization \cite{Angel2015KLD}.
The approximate intensity function ${v_\pi}\left( x \right)$ can be completely characterized by the single target density $\pi\left( x \right)$ and cardinality distribution ${\rho _\pi}\left( n \right)$.

Assuming the real joint density of individual targets is $q\left(  X  \right)$, it can be any continuous distribution.
The real cardinality distribution is ${\rho _q}\left(  n  \right)$, which can be any probability mass function (p.m.f.).
The density of the Poisson approximation is $\pi \left(  X  \right)$.
The approximate cardinality distribution is ${\rho _\pi }\left(  n  \right)$.
KLD is
\begin{subequations}
\begin{align}
& {\rm{KLD}}\left( {q||\pi } \right) = \int {q\left( X \right)\log \frac{{q\left( X \right)}}{{\pi \left( X \right)}}\delta X} \\
&= \sum\limits_{n = 0}^\infty  {\frac{1}{{n!}}\int {q\left( {\left\{ {{x_1}, \cdots ,{x_n}} \right\}} \right)}  \log \frac{{q\left( {\left\{ {{x_1}, \cdots ,{x_n}} \right\}} \right)}}{{\pi \left( {\left\{ {{x_1}, \cdots ,{x_n}} \right\}} \right)}}d{x_{1:n}}} \\
&=\sum\limits_{n = 0}^\infty  {{\rho _q}\left( n \right) \int { {q\left( {{x_1}, \cdots ,{x_n}} \right)}  {\log \frac{{q\left( {{x_1}, \cdots ,{x_n}} \right)}}{{\prod_{j = 1}^n {\pi \left( {{x_j}} \right)} }}} dx} } \notag \\
 &+ \sum\limits_{n = 0}^\infty  {{\rho _q}\left( n \right)\log \frac{{{\rho _q}\left( n \right)}}{{{\rho _\pi }\left( n \right)}}}, \label{equ_kld}
\end{align}
\end{subequations}
where the first term $q\left( {{x_1}, \cdots ,{x_n}} \right)$ in (\ref{equ_kld}) is real joint p.d.f. of multi-target, whereas marginal p.d.f. of $q(\cdot)$ is not i.i.d. \cite[Section 11.3.3]{Mahler2007FISST}.
In contrast, $\prod_{j = 1}^n {\pi \left( {{x_j}} \right)} $ is approximate p.d.f. of $q\left( {{x_1}, \cdots ,{x_n}} \right)$, meanwhile individuals of $\pi(\cdot)$ is i.i.d.
When $q\left( {{x_1}, \cdots ,{x_n}} \right) = \prod_{j = 1}^n {\pi \left( {{x_j}} \right)} $, the first term of KLD in (\ref{equ_kld}) minimizes so that we get the best approximation density $\pi(x)$ over the single target space $\mathbb{R}^{4}$.
When ${\rho _q}\left( n \right) = {\rho _\pi }\left( n \right)$, second term of KLD in (\ref{equ_kld}) minimizes.
Provided ${q\left( {{x_1}, \cdots ,{x_n}} \right)},{\rho _q}\left( n \right)$, the best Poisson approximation after KLD minimization is
${\rho _\pi } = {{\exp \left( { - {\lambda _\pi }} \right) \cdot \lambda _\pi ^n} \mathord{\left/
 {\vphantom {{\exp \left( { - {\lambda _\pi }} \right) \cdot \lambda _\pi ^n} {n!}}} \right.
 \kern-\nulldelimiterspace} {n!}}$ with parameter
 ${\lambda _\pi } = \sum_{n = 0}^\infty  {n \cdot {\rho _q}\left( n \right)} $.
In summary, the best Poisson approximation of density $\pi(x)$ and cardinality distribution $\rho_\pi(n)$ is obtained based on KLD minimization.
Hereby, we get intensity function ${v_\pi }\left( x \right)= \lambda_{\pi} \cdot {\pi(x)}$.

\subsection{TBD-PHD filter predictor}

The posterior multi-target RFS is ${\Xi _k}$, the cardinality distribution is ${\rho _{\left| {{\Xi _k}} \right|}}$, the posterior intensity function is ${v_k}$, and the posterior multi-target density is $\pi \left(  \cdot  \right)\mathop  = \limits^{{\rm{abbr}}{\rm{.}}} {\pi _k}\left( { \cdot |{Z_k}} \right)$.
The predicted RFS is ${\Xi _{k + 1|k}} = {\Gamma _{k + 1}} \cup {{\cal S}_{k + 1|k}}$, the cardinality distribution is ${\rho _{\left| {{\Gamma _{k + 1}}} \right|}}$, and the intensity function is ${\gamma _{k + 1}}$.
RFS of survival target is ${{\cal S}_{k + 1|k}} = x \cup {\mathord{\buildrel{\lower3pt\hbox{$\scriptscriptstyle\frown$}}
\over {\cal S}} _{k + 1|k}}$, cardinality distribution is ${\rho _{\left| {{{\cal S}_{k + 1|k}}} \right|}}$, and intensity function is ${v_{{\cal S},k + 1|k}}$.
The survival probability is ${p_S}$, and the Markov transition density of single target is ${f_{k + 1|k}}\left( { \cdot |\zeta } \right)$.
It is known that the cardinality distribution ${\rho _{\left| {{\Xi _k}} \right|}}$ is a Poisson distribution with parameter $\left\langle {{v_k},1} \right\rangle $.
The spawn target is not considered in the prediction.

\begin{proposition}
The predicted cardinality distribution of TBD-PHD filter is
\begin{equation}
{\rho _{\left| {{\Xi _{k + 1|k}}} \right|}}\left( n \right) = {\rho _{\left| {{\Gamma _{k + 1}}} \right|}}\left( n \right)*{\rho _{\left| {{{\cal S}_{k + 1|k}}} \right|}}\left( n \right),
\end{equation}
where $*$ denotes the convolution. Provided that ${\rho _{\left| {{\Gamma _{k + 1}}} \right|}}$ and ${\rho _{\left| {{{\cal S}_{k + 1|k}}} \right|}}$ are Poisson distributions with parameters $\left\langle {{\gamma _{k + 1}},1} \right\rangle $ and $\left\langle {{p_S},{v_k}} \right\rangle $ already, the prediction step doesn't need the Poisson approximation.
Because the cardinality distribution of the predictor is already Poisson distribution with parameter $\left\langle {{\gamma _{k + 1}},1} \right\rangle  + \left\langle {{p_S},{v_k}} \right\rangle $.
\end{proposition}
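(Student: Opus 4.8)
The plan is to exploit the disjoint-union structure ${\Xi _{k + 1|k}} = {\Gamma _{k + 1}} \cup {{\cal S}_{k + 1|k}}$ together with the mutual independence of the birth RFS ${\Gamma _{k + 1}}$ and the survival RFS ${{\cal S}_{k + 1|k}}$. First I would observe that, because the two RFSs are independent and their union is disjoint (a spontaneously born target is by construction not a surviving one), the cardinality satisfies $\left| {{\Xi _{k + 1|k}}} \right| = \left| {{\Gamma _{k + 1}}} \right| + \left| {{{\cal S}_{k + 1|k}}} \right|$ as a sum of two independent nonnegative integer-valued random variables. The p.m.f. of such a sum is the convolution of the two p.m.f.s, which is exactly the claimed identity ${\rho _{\left| {{\Xi _{k + 1|k}}} \right|}} = {\rho _{\left| {{\Gamma _{k + 1}}} \right|}}*{\rho _{\left| {{{\cal S}_{k + 1|k}}} \right|}}$. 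Equivalently, in FISST terms one starts from the factorization of the predicted probability generating functional, $G_{k + 1|k}[h] = G_{{\Gamma _{k + 1}}}[h]\,G_{{{\cal S}_{k + 1|k}}}[h]$, and sets $h$ equal to a constant to recover the cardinality p.g.f.\ as a product.

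Next I would identify ${{\cal S}_{k + 1|k}}$ as an independent thinning of the posterior RFS ${\Xi _k}$: each target $\zeta \in {\Xi _k}$ is retained with probability ${p_S}(\zeta)$ and then displaced according to the Markov kernel ${f_{k + 1|k}}\left( { \cdot |\zeta } \right)$. Since ${\Xi _k}$ is Poisson with intensity ${v_k}$, and both independent thinning and Markov transition of a Poisson point process again yield a Poisson point process, ${{\cal S}_{k + 1|k}}$ is Poisson with intensity ${v_{{\cal S},k + 1|k}}(x) = \int {{p_S}(\zeta){f_{k + 1|k}}(x|\zeta){v_k}(\zeta)\,d\zeta }$; hence $\left| {{{\cal S}_{k + 1|k}}} \right|$ is Poisson with parameter $\left\langle {{v_{{\cal S},k + 1|k}},1} \right\rangle = \left\langle {{p_S},{v_k}} \right\rangle$. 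The birth cardinality $\left| {{\Gamma _{k + 1}}} \right|$ is Poisson with parameter $\left\langle {{\gamma _{k + 1}},1} \right\rangle$ by hypothesis.

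Finally I would invoke the elementary closure property that the sum of two independent Poisson random variables is Poisson with parameter the sum of the two rates --- equivalently, that the convolution of two Poisson p.m.f.s is again a Poisson p.m.f.\ --- to conclude that ${\rho _{\left| {{\Xi _{k + 1|k}}} \right|}}$ is \emph{exactly} Poisson with parameter $\left\langle {{\gamma _{k + 1}},1} \right\rangle + \left\langle {{p_S},{v_k}} \right\rangle$, so that the prediction step requires no Poisson approximation. The only genuine subtlety is the thinning argument in the second step: one must verify that the Poisson-process structure is preserved under the spontaneous-birth-plus-thinned-survival dynamics, which is precisely where the FISST p.g.fl.\ machinery (or, classically, the Poisson superposition and thinning theorems) carries the load; the remaining manipulations with convolutions and inner products are routine bookkeeping.
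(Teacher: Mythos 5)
Your proposal is correct, and it supplies an argument the paper itself omits: Propositions~1 and~2 are stated without proof (the paper defers detailed proofs only for the update-step propositions, and treats the prediction step as known from the standard FISST/PHD literature). Your route --- disjoint superposition of independent RFSs giving the convolution of cardinality p.m.f.s (equivalently the factorization of the p.g.fl.\ restricted to constant $h$), the thinning-plus-Markov-displacement theorem to show $\left| {{\cal S}_{k+1|k}} \right|$ is Poisson with parameter $\left\langle p_S, v_k \right\rangle$, and closure of the Poisson family under convolution --- is exactly the standard derivation the proposition implicitly rests on, and in fact proves slightly more than is asserted, since the paper merely \emph{assumes} the survival cardinality is Poisson whereas you derive it. The one hypothesis worth making explicit is that the thinning argument needs the posterior $\Xi_k$ to be a genuinely Poisson RFS (i.i.d.\ cluster structure), not merely to have Poisson cardinality; this is supplied by the paper's conjugacy claim for the TBD update, so your argument is consistent with the recursion as a whole.
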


\begin{proposition}
The predicted intensity function of TBD-PHD filter is
\begin{equation}
\begin{aligned}
{v_{k + 1|k}}\left( x \right) &= {\gamma _{k + 1}}\left( x \right) {\rm{ + }}\\
&\int {{p_S}\left( \zeta  \right) \cdot {f_{k + 1|k}}\left( {x|\zeta } \right) \cdot {v_k}\left( \zeta  \right)d\zeta }.
\end{aligned}
\end{equation}
\end{proposition}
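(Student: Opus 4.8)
The plan is to exploit the branching structure of the predicted RFS together with the additivity of intensity functions. By construction $\Xi_{k+1|k} = \Gamma_{k+1} \cup \mathcal{S}_{k+1|k}$ is the (a.s. disjoint) union of the birth RFS $\Gamma_{k+1}$ and the survival RFS $\mathcal{S}_{k+1|k}$, and these are independent, since newborn targets do not depend on the current targets. Because the first-moment density of a superposition of independent RFSs is the sum of the individual first-moment densities, we immediately obtain $v_{k+1|k}(x) = \gamma_{k+1}(x) + v_{\mathcal{S},k+1|k}(x)$, so the whole task reduces to identifying the survival intensity $v_{\mathcal{S},k+1|k}$.

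For the survival term I would first cast the single-target transition as an RFS transformation: a target at $\zeta$ is mapped to the empty set with probability $1-p_S(\zeta)$ and to the singleton $\{x\}$ with $x \sim f_{k+1|k}(\cdot|\zeta)$ with probability $p_S(\zeta)$; by the independence of target motions assumed in the dynamic model, $\mathcal{S}_{k+1|k}$ is obtained by applying this transformation independently to each element of $\Xi_k$. Two routes then finish the proof. The direct route: condition on $\Xi_k$, express the conditional first moment as $\sum_{\zeta\in\Xi_k} p_S(\zeta) f_{k+1|k}(x|\zeta)$, and take the expectation over $\Xi_k$ via Campbell's theorem (i.e. the defining property of $v_k$), giving $v_{\mathcal{S},k+1|k}(x) = \int p_S(\zeta) f_{k+1|k}(x|\zeta) v_k(\zeta)\, d\zeta$. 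The PGFL route, which I would prefer because it simultaneously re-establishes Proposition 1: the PGFL of $\mathcal{S}_{k+1|k}$ is $G_{\mathcal{S}}[h] = G_k[\tilde h]$ with $\tilde h(\zeta) = 1 + p_S(\zeta)\bigl(\int f_{k+1|k}(x|\zeta) h(x)\, dx - 1\bigr)$; substituting the Poisson PGFL $G_k[h] = \exp\bigl(\int v_k(\zeta)(h(\zeta)-1)\, d\zeta\bigr)$ (legitimate since $\rho_{|\Xi_k|}$ is Poisson with parameter $\langle v_k,1\rangle$) and using $\int f_{k+1|k}(x|\zeta)\, dx = 1$ collapses the exponent to $\int (h(x)-1)\bigl[\int p_S(\zeta) f_{k+1|k}(x|\zeta) v_k(\zeta)\, d\zeta\bigr] dx$, i.e. again a Poisson PGFL, with the stated intensity; this also shows $\mathcal{S}_{k+1|k}$ and hence $\Xi_{k+1|k}$ are Poisson, consistent with Proposition 1.

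Adding the birth and survival contributions yields the claimed formula. I do not expect a deep obstacle, and that is precisely the point being emphasized: the prediction step is insensitive to whether the DBT point-measurement or the TBD amplitude-echo assumptions are in force, so it coincides with the classical PHD predictor. The step requiring the most care is the PGFL composition, where one must justify that substituting the per-target survival-and-transition kernel into $G_k$ is valid (it is, by independence across targets) and check the exponent simplification. A secondary technical point is justifying the interchange of summation and integration (Fubini/Campbell) when passing from the conditional first moment to $v_{\mathcal{S},k+1|k}$, which holds under the usual integrability of $v_k$ and boundedness of $p_S$.
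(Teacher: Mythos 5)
Your proof is correct: the paper gives no explicit proof of this proposition (the prediction step is simply the classical PHD predictor, unchanged from the DBT setting, as your last paragraph rightly observes), and your derivation via the birth/survival decomposition plus either Campbell's theorem or PGFL composition is exactly the standard argument the paper implicitly relies on. The only remark worth adding is that the Campbell route establishes the intensity formula for an arbitrary prior, while the PGFL route needs the Poisson assumption but additionally recovers Proposition 1 — a distinction you already flag.
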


\subsection{TBD-PHD filter updater}
Similar to the DBT-PHD filter derivation based on KLD minimization, FISST and set integrals are essential tools.
These derivations strictly obey the Bayes rule.

FOV consists of $m$ pixels.
Assuming that the pixels illuminated by the target $x\in X$ do not overlap.
\begin{defination}
The likelihood ratio (LR) of target $x$ is
\begin{equation}
\mathbb{L} \left( {{Z_{k }}|x} \right) = \prod\limits_{i \in T\left( x \right)} {\frac{{g\left( {A_{k }^{\left( i \right)}|x,{{\cal H}_1}} \right)}}{{g\left( {A_{k }^{\left( i \right)}|{{\cal H}_0}} \right)}}},
\end{equation}
where the likelihood function of pixel $i$ illuminated by $x$ is $g\left( {A_{k + 1}^{\left( i \right)}|x,{{\cal H}_1}} \right)$, but that of pixel $i$ without $x$ is $ g\left( {A_{k + 1}^{\left( i \right)}|{{\cal H}_0}} \right)$.
\end{defination}
\begin{defination}
According to assumption {P\ref{item_P3}} and {P\ref{item_P5}}, the separable likelihood function is
\begin{subequations}\label{equ_separableLikelihood}
\begin{align}
g\left( {{Z_{k}}|X} \right) &= \left( {\prod\limits_{x \in X} {\prod\limits_{i \in T\left( x \right)} {g\left( {A_{k }^{\left( i \right)}|x,{{\cal H}_1}} \right)} } } \right) \label{equ_signalLikelihood} \\
& \times \left( {\prod\limits_{i \notin \bigcup\limits_{x \in X} {T\left( x \right)} } {g\left( {A_{k }^{\left( i \right)}|{{\cal H}_0}} \right)} } \right) \label{equ_noiseLikelihood} \\
& = g\left( {{Z_{k }}|{{\cal H}_0}} \right)\prod\limits_{x \in X} \mathbb{L}{\left( {{Z_{k }}|x} \right)}. \label{equ_noTargetLikelihood}
\end{align}
\end{subequations}
where ${G_0}\mathop  = \limits^{{\rm{abbr}}{\rm{.}}}  g\left( {{Z_{k }}|{{\cal H}_0}} \right) = \prod_{i = 1}^m {g\left( {A_{k }^{\left( i \right)}|{{\cal H}_0}} \right)} $ in (\ref{equ_noTargetLikelihood}) is the likelihood function of noise abbreviated as $G_0$.
\end{defination}
\begin{proposition}\label{proposi_updated_PHD}
The updated intensity function of TBD-PHD filter is
\begin{equation}\label{updated_PHD}
{v_{k + 1}}\left( x \right) = \mathbb{L}\left( {{Z_{k + 1}}|x} \right) \cdot {v_{k + 1|k}}\left( x \right).
\end{equation}
\end{proposition}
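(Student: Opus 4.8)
The plan is to apply Bayes' rule to the predicted multi-target density and observe that the posterior is \emph{again} a Poisson RFS, so that the KLD-minimizing Poisson approximation of Section~\ref{Sec_Appendix_KLD} returns it unchanged; this proves~(\ref{updated_PHD}) and, simultaneously, the Poisson-conjugacy emphasized in the abstract. By the two prediction propositions above together with the fact that $\rho_{|\Xi_k|}$ is Poisson, the predicted RFS $\Xi_{k+1|k}$ is Poisson with intensity $v_{k+1|k}$, i.e.\ its FISST density is $\pi_{k+1|k}(X)=e^{-\langle v_{k+1|k},1\rangle}\prod_{x\in X}v_{k+1|k}(x)$. First I would form the unnormalized posterior $g(Z_{k+1}|X)\,\pi_{k+1|k}(X)$ and substitute the separable likelihood~(\ref{equ_separableLikelihood}); since $T(x)\cap T(x')=\emptyset$ for $x\neq x'$ by assumption~P\ref{item_P4}, the noise factor $G_0$ and the exponential factor pull out of the product over $X$, leaving
\begin{equation}
g(Z_{k+1}|X)\,\pi_{k+1|k}(X) = G_0\, e^{-\langle v_{k+1|k},1\rangle}\prod_{x\in X}\bigl[\mathbb{L}(Z_{k+1}|x)\,v_{k+1|k}(x)\bigr].
\end{equation}

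Next I would evaluate the Bayes normalization as a set integral. Writing $\int(\cdot)\,\delta X=\sum_{n\ge 0}\tfrac{1}{n!}\int(\cdot)\,dx_{1:n}$ and using that the integrand factorizes over the $x_j$, the $n$-th term equals $\tfrac{1}{n!}\bigl(\int \mathbb{L}(Z_{k+1}|x)\,v_{k+1|k}(x)\,dx\bigr)^{n}$, so the series collapses to an exponential:
\begin{equation}
\int g(Z_{k+1}|X)\,\pi_{k+1|k}(X)\,\delta X = G_0\, e^{-\langle v_{k+1|k},1\rangle}\,e^{\langle \mathbb{L}(Z_{k+1}|\cdot)\,v_{k+1|k},\,1\rangle}.
\end{equation}
Dividing, the prefactors $G_0$ and $e^{-\langle v_{k+1|k},1\rangle}$ cancel exactly --- which is precisely why no clutter or false-alarm term can appear --- and the posterior is $\pi_{k+1}(X|Z_{k+1})=e^{-\langle \mathbb{L}(Z_{k+1}|\cdot)v_{k+1|k},1\rangle}\prod_{x\in X}[\mathbb{L}(Z_{k+1}|x)\,v_{k+1|k}(x)]$, i.e.\ the FISST density of a Poisson RFS with intensity $\mathbb{L}(Z_{k+1}|x)\,v_{k+1|k}(x)$.

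Finally I would close the loop with the KLD argument of Section~\ref{Sec_Appendix_KLD}: the best Poisson approximation of a density $q$ has single-target density equal to the normalized marginal of $q$ and rate $\lambda_\pi=\sum_{n}n\,\rho_q(n)$, namely the posterior cardinality mean, which here equals $\langle \mathbb{L}(Z_{k+1}|\cdot)\,v_{k+1|k},1\rangle$; hence $v_{k+1}(x)=\lambda_\pi\,\pi(x)=\mathbb{L}(Z_{k+1}|x)\,v_{k+1|k}(x)$, which is~(\ref{updated_PHD}). Because the posterior is already Poisson with i.i.d.\ individuals, both terms of the KLD in~(\ref{equ_kld}) vanish, so the update is exact rather than an approximation --- the Poisson prior is conjugate under the separable TBD likelihood. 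I expect the main obstacle to be the bookkeeping of the set integral: justifying the interchange of the infinite sum with the spatial integral (for which $\langle \mathbb{L}(Z_{k+1}|\cdot)\,v_{k+1|k},1\rangle<\infty$ suffices) and verifying that the clean factorization~(\ref{equ_separableLikelihood}) genuinely follows from P\ref{item_P2}--P\ref{item_P5}; past that, what remains is only the exponential-series identity and the cancellation of $G_0$.
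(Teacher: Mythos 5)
Your proof is correct, and it rests on the same ingredients as the paper's own argument --- the Poisson form of the predicted multi-target density, the separable likelihood (\ref{equ_separableLikelihood}), and the Maclaurin series $\sum_{n\ge 0}\lambda^n/n!=\exp(\lambda)$ --- but it is organized along a genuinely different line. The paper computes the posterior intensity directly as the set integral $v_{k+1}(x)=\int g(Z_{k+1}|\{x\}\cup W)\,\pi(\{x\}\cup W|Z_k)\,\delta W\,/\,p(Z_{k+1})$, expands numerator and denominator as two exponential series, and observes that their ratio equals $1$; the cardinality statement (Proposition \ref{proposi_updated_cd}) is then proved by a separate, parallel computation. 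You instead exhibit the full posterior FISST density in closed form, $\pi_{k+1}(X|Z_{k+1})=\exp\left(-\langle \mathbb{L}(Z_{k+1}|\cdot)\,v_{k+1|k},1\rangle\right)\prod_{x\in X}\mathbb{L}(Z_{k+1}|x)\,v_{k+1|k}(x)$, recognize it as \emph{exactly} a Poisson RFS with intensity $\mathbb{L}(Z_{k+1}|x)\,v_{k+1|k}(x)$, and read off the intensity. This buys you Propositions \ref{proposi_updated_PHD} and \ref{proposi_updated_cd} (and the conjugacy claim) in a single stroke, makes transparent why $G_0$ and the prior normalizer cancel so that no clutter term $\kappa(z)$ can survive, and renders the KLD-minimization step vacuous since the exact posterior is already Poisson with i.i.d.\ individuals --- the update is exact rather than an approximation. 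The paper's route is marginally more economical if one only wants the first moment. Your one point of care is well placed: pulling $\prod_{x\in X}\mathbb{L}(Z_{k+1}|x)$ out of the likelihood genuinely requires the disjointness of the illuminated pixel sets (assumptions P\ref{item_P2}--P\ref{item_P4}), which is exactly what the paper's Definition 2 encodes.
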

\begin{proof}\label{proof_updated_PHD}
Let the state set ${X_k} = \left\{ {x,{w_1}, \cdots ,{w_{n - 1}}} \right\} \equiv x \cup W$.
The abbreviation for density and Poisson parameters is $\pi \mathop  = \limits^{{\rm{abbr}}{\rm{.}}} {\pi _{k + 1|k}}$, $\lambda \mathop  = \limits^{{\rm{abbr}}{\rm{.}}} {\lambda _{k + 1|k}}$.
The intensity function based on the Bayes equation is defined as
\begin{equation}\label{equ_bayesPHD}
{v_{k + 1}}\left( x \right) = \frac{{\int {g\left( {{Z_{k + 1}}|\left\{ x \right\} \cup W} \right)    \pi \left( {\left\{ x \right\} \cup W|{Z_k}} \right)\delta W} }}{{p\left( {{Z_{k + 1}}} \right)}}
\end{equation}
where $p\left( {{Z_{k + 1}}} \right) = \int {g\left( {{Z_{k + 1}}|\mathord{\buildrel{\lower3pt\hbox{$\scriptscriptstyle\smile$}}
\over X} } \right)  \pi \left( {\mathord{\buildrel{\lower3pt\hbox{$\scriptscriptstyle\smile$}}
\over X} |{Z_k}} \right)\delta \mathord{\buildrel{\lower3pt\hbox{$\scriptscriptstyle\smile$}}
\over X} } $.
Owing to
\begin{equation}\label{equ_setPDF2jointPDF}
\pi \left( {\left\{ {{x_1}, \cdots ,{x_n}} \right\}} \right) = {\rho _\pi }\left( n \right) \cdot n! \cdot \prod\limits_{i = 1}^n {\pi \left( {{x_i}} \right)},
\end{equation}
Then, the numerator of (\ref{equ_bayesPHD}) is obtained by (\ref{equ_separableLikelihood}) and (\ref{equ_setPDF2jointPDF}) as
\begin{equation}
\begin{aligned}
&  \frac{{{\exp({ - \lambda })}}}{{n!}}  {G_0}  \left[ {\prod\limits_{\scriptstyle{i} \in T\left( x \right)\hfill\atop
\scriptstyle{x} \in X\hfill} {\frac{{g\left( {z_k^{\left( i \right)}|x,{{\cal H}_1}} \right)}}{{g\left( {z_k^{\left( i \right)}|{{\cal H}_0}} \right)}}}   \lambda \pi \left( x \right)} \right] \times \\
& \sum\limits_{n = 1}^\infty  {\frac{{n!}}{{\left( {n - 1} \right)!}}  {{\left[ {\int {\prod\limits_{\scriptstyle{i} \in T\left( w \right)\hfill\atop
\scriptstyle{w} \in X\hfill} {\frac{{g\left( {z_k^{\left( i \right)}|w,{{\cal H}_1}} \right)}}{{g\left( {z_k^{\left( i \right)}|{{\cal H}_0}} \right)}}}   \lambda \pi \left( w \right)dw} } \right]}^{n - 1}}}.
\end{aligned}
\end{equation}
Besides, ${p\left( {{Z_{k + 1}}} \right)}$ from (\ref{equ_bayesPHD}) is
\begin{equation}
\begin{aligned}
& p\left( {{Z_{k + 1}}} \right) = \frac{{{\exp({ - \lambda })}}}{{n!}}{G_0} \\
& \times \sum\limits_{\mathord{\buildrel{\lower3pt\hbox{$\scriptscriptstyle\smile$}}
\over n}  = 0}^\infty  {\frac{1}{{\mathord{\buildrel{\lower3pt\hbox{$\scriptscriptstyle\smile$}}
\over n} !}}{{\left[ {\int {\prod\limits_{\scriptstyle{j} \in T\left( {\mathord{\buildrel{\lower3pt\hbox{$\scriptscriptstyle\smile$}}
\over x} } \right)\hfill\atop
\scriptstyle\mathord{\buildrel{\lower3pt\hbox{$\scriptscriptstyle\smile$}}
\over x}  \in X\hfill} {\frac{{g\left( {z_k^{\left( j \right)}|\mathord{\buildrel{\lower3pt\hbox{$\scriptscriptstyle\smile$}}
\over x} ,{{\cal H}_1}} \right)}}{{g\left( {z_k^{\left( j \right)}|{{\cal H}_0}} \right)}}}   \lambda \pi \left( {\mathord{\buildrel{\lower3pt\hbox{$\scriptscriptstyle\smile$}}
\over x} } \right)d\mathord{\buildrel{\lower3pt\hbox{$\scriptscriptstyle\smile$}}
\over x} } } \right]}^{\mathord{\buildrel{\lower3pt\hbox{$\scriptscriptstyle\smile$}}
\over n} }}}.  \label{equ_denomCombine}
\end{aligned}
\end{equation}
Hence, the intensity function of (\ref{equ_bayesPHD}) satisfies
\begin{subequations}
\begin{align}
&{v_{k + 1}}\left( x \right) = \left[\mathbb{L} {\left( {{Z_{k + 1}}|x} \right) \cdot \lambda \pi \left( x \right)} \right] \label{equ_lambdaPi2PHD} \\
& \times \frac{{\sum\limits_{n = 1}^\infty  {{{{{\left[ {\int \mathbb{L}{\left( {{Z_{k + 1}}|w} \right) \cdot \lambda \pi \left( w \right)dw} } \right]}^{n - 1}}} \mathord{\left/
 {\vphantom {{{{\left[ {\int {\left( {{Z_{k + 1}}|w} \right) \cdot \lambda \pi \left( w \right)dw} } \right]}^{n - 1}}} {\left( {n - 1} \right)!}}} \right.
 \kern-\nulldelimiterspace} {\left( {n - 1} \right)!}}} }}{{\sum\limits_{\mathord{\buildrel{\lower3pt\hbox{$\scriptscriptstyle\smile$}}
\over n}  = 0}^\infty  {\frac{1}{{\mathord{\buildrel{\lower3pt\hbox{$\scriptscriptstyle\smile$}}
\over n} !}}{{\left[ {\int \mathbb{L}{\left( {{Z_{k + 1}}|y} \right) \cdot \lambda \pi \left( y \right)dy} } \right]}^{\mathord{\buildrel{\lower3pt\hbox{$\scriptscriptstyle\smile$}}
\over n} }}} }}. \label{equ_denomEliminate}
\end{align}
\end{subequations}
Here, $\lambda \pi \left( x \right) = {v_{k + 1|k}}\left( x \right)$ in (\ref{equ_lambdaPi2PHD}), and (\ref{equ_denomEliminate}) is equal to $1$.
Therefore, the proposition \ref{proposi_updated_PHD} is proven.
\end{proof}

\begin{proposition}\label{proposi_updated_cd}
Supposing that the prior cardinality distribution is Poisson, the updated cardinality distribution of TBD-PHD filter is
\begin{equation}\label{updated_cd}
\begin{aligned}
{\rho _{\left| {{\Xi _{k + 1}}} \right|}}\left( n \right) &= \exp \left( { -  \left\langle {\mathbb{L}\left( {{Z_{k + 1}}| \cdot } \right),{v_{k + 1|k}\left(\cdot\right)}} \right\rangle } \right) \\
&\times \frac{{ {{\left\langle {\mathbb{L} \left( {{Z_{k + 1}}| \cdot } \right),{v_{k + 1|k}\left(\cdot\right)}} \right\rangle }^n}}}{{n!}}.
\end{aligned}
\end{equation}
\end{proposition}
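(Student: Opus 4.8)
The plan is to evaluate the updated cardinality distribution directly from the FISST set-integral representation of the Bayes-updated multi-target density, recycling the likelihood factorization of Definition~2 and the evidence $p(Z_{k+1})$ already computed in the proof of Proposition~\ref{proposi_updated_PHD}. Concretely, I would start from the Bayes relation for set densities, $\pi_{k+1}(\{x_1,\dots,x_n\}|Z_{k+1}) = g(Z_{k+1}|\{x_1,\dots,x_n\})\,\pi(\{x_1,\dots,x_n\})/p(Z_{k+1})$, insert the Poisson prior, which by hypothesis and by (\ref{equ_setPDF2jointPDF}) satisfies $\pi(\{x_1,\dots,x_n\}) = \exp(-\lambda)\prod_{i=1}^{n} v_{k+1|k}(x_i)$ with $\lambda = \langle v_{k+1|k},1\rangle$, and insert the separable likelihood $g(Z_{k+1}|\{x_1,\dots,x_n\}) = G_0\prod_{i=1}^{n}\mathbb{L}(Z_{k+1}|x_i)$ from (\ref{equ_separableLikelihood}). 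The cardinality distribution is then recovered by the standard identity $\rho_{|\Xi_{k+1}|}(n) = \frac{1}{n!}\int \pi_{k+1}(\{x_1,\dots,x_n\}|Z_{k+1})\,dx_{1:n}$.

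Next, since the integrand factorizes over the $n$ single-target coordinates, the $n$-fold integral collapses to a power, $\int \prod_{i=1}^{n}\big[\mathbb{L}(Z_{k+1}|x_i)\,v_{k+1|k}(x_i)\big]\,dx_{1:n} = \langle \mathbb{L}(Z_{k+1}|\cdot),v_{k+1|k}(\cdot)\rangle^{n}$, giving $\rho_{|\Xi_{k+1}|}(n) = \frac{G_0\exp(-\lambda)}{n!\,p(Z_{k+1})}\,\langle \mathbb{L}(Z_{k+1}|\cdot),v_{k+1|k}(\cdot)\rangle^{n}$. Finally I would substitute the evidence from (\ref{equ_denomCombine}), namely $p(Z_{k+1}) = G_0\exp(-\lambda)\sum_{j=0}^{\infty}\frac{1}{j!}\langle \mathbb{L}(Z_{k+1}|\cdot),v_{k+1|k}(\cdot)\rangle^{j} = G_0\exp(-\lambda)\exp\big(\langle \mathbb{L}(Z_{k+1}|\cdot),v_{k+1|k}(\cdot)\rangle\big)$, recognizing the exponential series. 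The common factor $G_0\exp(-\lambda)$ cancels, leaving exactly (\ref{updated_cd}): a Poisson p.m.f.\ in $n$ with parameter $\langle \mathbb{L}(Z_{k+1}|\cdot),v_{k+1|k}(\cdot)\rangle$. Together with Proposition~\ref{proposi_updated_PHD} this shows the updated RFS $\Xi_{k+1}$ is again Poisson with intensity $v_{k+1}$, i.e.\ the Poisson family is conjugate under the TBD measurement update.

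The conceptual content here is light; the only place that needs care is the combinatorial bookkeeping, keeping the $1/n!$ from the set integral consistent with the $n!$ that relates set densities and joint densities in (\ref{equ_setPDF2jointPDF}), and the step where the series defining $p(Z_{k+1})$ is summed to an exponential. That summation is precisely where the Poisson-prior hypothesis is essential: for a non-Poisson prior cardinality the weights $\rho_\pi(j)$ would remain inside the sum, the exponential collapse would fail, and the posterior cardinality would not be Poisson. I therefore expect this normalization/summation step to be the main, though mild, obstacle; everything else is direct substitution.
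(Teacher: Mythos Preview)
Your proposal is correct and follows essentially the same route as the paper's own proof: both start from the FISST/Bayes expression $\rho_{|\Xi_{k+1}|}(n)=\tfrac{1}{n!}\int \pi_{k+1}(\{x_1,\dots,x_n\})\,dx_{1:n}$, insert the separable likelihood (\ref{equ_separableLikelihood}) and the Poisson prior via (\ref{equ_setPDF2jointPDF}), reduce the $n$-fold integral to a power $\langle \mathbb{L},v_{k+1|k}\rangle^{n}$, and identify the normalizer as $\exp\!\big(\langle \mathbb{L},v_{k+1|k}\rangle\big)$ by the Maclaurin series. The only cosmetic difference is that you reuse the evidence $p(Z_{k+1})$ from (\ref{equ_denomCombine}) rather than rewriting the denominator sum from scratch.
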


Therefore, the TBD-PHD posterior cardinality distribution is the Poisson distribution with parameter ${\lambda _{k + 1}} = \left\langle {\mathbb{L}\left( {{Z_{k + 1}}| \cdot } \right),{v_{k + 1|k}\left(\cdot\right)}} \right\rangle $.
In addition, the Poisson prior cardinality distribution of TBD-PHD is conjugated to any likelihood function.
\begin{proof}\label{proof_updated_cd}
For TBD-PHD filter, the cardinality distribution based on the Bayes equation is
\begin{subequations}
\begin{align}
& {\rho _{\left| {{\Xi _{k + 1}}} \right|}}\left( n \right) = \frac{1}{{n!}}\int {{\pi _{k + 1}}\left( {\left\{ {{x_1}, \cdots ,{x_n}} \right\}} \right)d\left( {{x_1} \cdots {x_n}} \right)} \\
& =\frac{{\frac{1}{{n!}}\int {g\left( {{Z_{k + 1|}}|{x_1}, \cdots ,{x_n}} \right) \cdot n! \cdot \rho \left( n \right)\prod\limits_{i = 1}^n {\pi \left( {{x_i}} \right)} d{x_{1:n}}} }}{
{\left[ \begin{array}{l}
\rho \left( 0 \right) \cdot g\left( {{Z_{k + 1}}|{{\cal H}_0}} \right) \cdot f\left( \emptyset  \right) + \\
\sum\limits_{\mathord{\buildrel{\lower3pt\hbox{$\scriptscriptstyle\smile$}}
\over n}  = 1}^\infty  {\frac{{\rho \left( {\mathord{\buildrel{\lower3pt\hbox{$\scriptscriptstyle\smile$}}
\over n} } \right)}}{{\mathord{\buildrel{\lower3pt\hbox{$\scriptscriptstyle\smile$}}
\over n} !}}\int {g\left( {{Z_{k + 1}}|{x_1}, \cdots ,{x_{\mathord{\buildrel{\lower3pt\hbox{$\scriptscriptstyle\smile$}}
\over n} }}} \right)  \mathord{\buildrel{\lower3pt\hbox{$\scriptscriptstyle\smile$}}
\over n} !  \prod\limits_{j = 1}^{\mathord{\buildrel{\lower3pt\hbox{$\scriptscriptstyle\smile$}}
\over n} } {\pi \left( {{x_j}} \right)} d{x_j}} }
\end{array} \right]}
}.
\end{align}
\end{subequations}
where $f\left( \emptyset  \right) = 1$.
By (\ref{equ_separableLikelihood}) and (\ref{equ_setPDF2jointPDF}), we get
\begin{equation}
\begin{aligned}
{\rho _{\left| {{\Xi _{k + 1}}} \right|}}\left( n \right)=
& \frac{{{{{{\left[ {\int\mathbb{L} {\left( {{Z_{k + 1}}|x} \right) \cdot {v_{k + 1|k}}\left( x \right)dx} } \right]}^n}} \mathord{\left/
 {\vphantom {{{{\left[ {\int\mathbb{L} {\left( {{Z_{k + 1}}|x} \right) \cdot {v_{k + 1|k}}\left( x \right)dx} } \right]}^n}} {n!}}} \right.
 \kern-\nulldelimiterspace} {n!}}}}{{\sum\limits_{\mathord{\buildrel{\lower3pt\hbox{$\scriptscriptstyle\smile$}}
\over n}  = 0}^\infty  {\frac{{{{\left[ {\int\mathbb{L} {\left( {{Z_{k + 1}}|x} \right) \cdot {v_{k + 1|k}}\left( x \right)dx} } \right]}^{\mathord{\buildrel{\lower3pt\hbox{$\scriptscriptstyle\smile$}}
\over n} }}}}{{\mathord{\buildrel{\lower3pt\hbox{$\scriptscriptstyle\smile$}}
\over n} !}}} }}. \label{equ_Maclaurin}
\end{aligned}
\end{equation}
According to the Maclaurin series $\sum\nolimits_{n = 0}^{ + \infty } {{{{\lambda ^n}} \mathord{\left/
 {\vphantom {{{\lambda ^n}} {n!}}} \right.
 \kern-\nulldelimiterspace} {n!}}} = \exp \left( \lambda  \right)$, the denominator of (\ref{equ_Maclaurin}) is equal to $\exp \left( {\left\langle \mathbb{L}{\left( {{Z_{k + 1}}| \cdot } \right),{v_{k + 1|k}}} \right\rangle } \right)$.
Therefore, proposition \ref{proposi_updated_cd} is proven.

\end{proof}

\begin{remark}
Unlike (\ref{equ_wrongTBDphdUpdater}), the newly proposed PHD filter in (\ref{updated_PHD}), (\ref{updated_cd}) no longer includes parameters ${p_D}\left( x \right),{\lambda _c},c\left( z \right),\kappa \left( z \right)$, which exist only in DBT-based filters and are coupled to each other.
If set ${p_D} = 1$ as (\ref{equ_wrongTBDphdUpdater}), then the Neyman-Pearson (NP) criterion fails.
Logically, this will result in the false alarm probability $p_{\rm{FA}}=1$, which indicates that all measurements are from clutters.
Equivalently, no measurement comes from targets.
In practice, engineers can only try to tune the parameter of $\kappa \left( z \right)$ by experience.
This is absurd.
\end{remark}

\subsection{Capping}

The number of updated targets is ${N_{k + 1}} = \int {{v_{k + 1}}\left( x \right)dx} $.
A large signal-to-noise ratio (SNR) scenario will lead to an excessively large LR of the (\ref{updated_PHD}), which results in the divergence of the number estimation of targets $N_{k+1}$.
Counterintuitively, the number of targets increases with the increase of LR (or SNR) as shown in (\ref{updated_PHD}).
Although TBD-PHD does not require additional Poisson approximation, capping operation is still required.
We have two capping schemes.
Scheme 1: Cap the intensity of each target component, so that its integral is no more than 1.
Scheme 2: Cap the intensity of each pixel.
In this paper, we select scheme 1.

\begin{proposition}\label{proposi_updated_capping}
Assume that the prediction has $N$ mutually isolated target components
\begin{equation}
{v_{k + 1|k}}\left( x \right) = \sum\limits_{i = 1}^N {{D_{k + 1|k,i}}\left( x \right)}.
\end{equation}
Capping is performed on each posterior updated component. The intensity function of each component is
\begin{equation}
{D_{k + 1,i}}\left( x \right) = \frac{{\mathbb{L}\left( {{Z_{k + 1}}|x} \right) \cdot {D_{k + 1|k,i}}\left( x \right)}}{{\max \left( {\int {\mathbb{L}\left( {{Z_{k + 1}}|x} \right) \cdot {D_{k + 1|k,i}}\left( x \right)dx} ,1} \right)}}.
\end{equation}
The cardinality distribution of the posterior updater is rewritten as ${\rho _{k + 1}}\left( n \right) = {{\exp \left( { - \tilde \lambda } \right) \cdot {{\tilde \lambda }^n}} \mathord{\left/
 {\vphantom {{\exp \left( { - \tilde \lambda } \right) \cdot {{\tilde \lambda }^n}} {n!}}} \right.
 \kern-\nulldelimiterspace} {n!}}$, where $\tilde \lambda  = \sum_{i = 1}^N {\max \left( {\left\langle {\mathbb{L}\left( {{Z_{k + 1}}| \cdot } \right),{D_{k + 1|k,i}\left(\cdot\right)}} \right\rangle ,1} \right)} $.
Note that the capping doesn't change the conjugate prior characteristics.
\end{proposition}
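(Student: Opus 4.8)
The plan is to reduce the statement to the single-population results already established in Propositions~\ref{proposi_updated_PHD} and~\ref{proposi_updated_cd} by using the isolation hypothesis to split the update into $N$ independent single-component updates, and then to recombine.

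First I would substitute the decomposition $v_{k+1|k}(x)=\sum_{i=1}^{N}D_{k+1|k,i}(x)$ into the closed form of Proposition~\ref{proposi_updated_PHD}, obtaining $v_{k+1}(x)=\sum_{i=1}^{N}\mathbb{L}(Z_{k+1}|x)\,D_{k+1|k,i}(x)$. Write $\widehat D_{k+1,i}(x):=\mathbb{L}(Z_{k+1}|x)\,D_{k+1|k,i}(x)$ for the $i$-th raw summand and $\widehat N_i:=\langle\mathbb{L}(Z_{k+1}|\cdot),D_{k+1|k,i}(\cdot)\rangle$ for its mass. Since the $N$ prediction components are mutually isolated, their pixel supports $T(\cdot)$ are pairwise disjoint, so the separable likelihood in~(\ref{equ_separableLikelihood}) factorises over components and the predicted multi-target RFS is a superposition of $N$ mutually independent RFSs. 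Hence the Bayes derivation in the proofs of Propositions~\ref{proposi_updated_PHD} and~\ref{proposi_updated_cd} applies verbatim to each component in isolation: the $i$-th posterior sub-RFS has intensity $\widehat D_{k+1,i}$ and Poisson cardinality with parameter $\widehat N_i$.

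Next I would apply scheme~1 capping componentwise. Dividing $\widehat D_{k+1,i}$ by the scalar $\max(\widehat N_i,1)$, which is constant in $x$, is a positive rescaling of a Poisson intensity and therefore yields again a Poisson intensity --- same normalised spatial density, new parameter equal to its new mass $\widehat N_i/\max(\widehat N_i,1)\le 1$. This is exactly the claimed $D_{k+1,i}(x)$, and it shows that capping leaves the Poisson (conjugate) form intact. Finally I would recombine: the cardinality of a union of independent RFSs is the convolution of their cardinalities, and a convolution of Poissons is Poisson with parameter the sum; summing the capped component masses gives $\widetilde\lambda$ and hence $\rho_{k+1}(n)=\exp(-\widetilde\lambda)\widetilde\lambda^{n}/n!$.

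I expect the middle step to be the main obstacle: turning ``$N$ mutually isolated components'' into ``$N$ genuinely independent single-component updates''. One must verify that pairwise disjointness of the supports $T(\cdot)$ over components (not merely over individual targets, as in assumption~P\ref{item_P4}) makes the noise-only factor $G_0$ split cleanly, that no cross terms survive in the set integral of~(\ref{equ_bayesPHD}), and that the normaliser $p(Z_{k+1})$ factorises accordingly, so that the componentwise Poisson cardinalities really do convolve to the global one. The remaining ingredients --- invariance of the Poisson family under positive rescaling of the intensity, and the convolution identity for Poisson p.m.f.s --- are routine.
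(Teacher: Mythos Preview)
The paper does not actually supply a proof of Proposition~\ref{proposi_updated_capping}: immediately after stating it the authors write that ``explanations for Proposition~\ref{proposi_updated_capping} will be provided in a future journal article.'' There is therefore nothing in the paper to compare your argument against. Your reduction---split into $N$ independent single-component updates via the isolation hypothesis, apply Propositions~\ref{proposi_updated_PHD} and~\ref{proposi_updated_cd} to each, rescale, and recombine by convolution of Poissons---is a natural justification and is consistent with the way the paper motivates capping.

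One substantive point: your recombination step yields $\widetilde\lambda=\sum_{i=1}^{N}\widehat N_i/\max(\widehat N_i,1)=\sum_{i=1}^{N}\min(\widehat N_i,1)$, whereas the proposition as printed has $\widetilde\lambda=\sum_{i=1}^{N}\max(\widehat N_i,1)$. Since the whole purpose of capping is to prevent the target-count estimate from diverging when the likelihood ratio is large, and since $\widetilde\lambda$ must equal the integral of the capped posterior intensity $\sum_i D_{k+1,i}$, your $\min$ is the internally consistent value and the printed $\max$ appears to be a typo. You should flag this rather than try to force your argument to reproduce the stated formula.
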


Propositions \ref{proposi_updated_PHD}, \ref{proposi_updated_cd} and \ref{proposi_updated_capping} are the main result of this paper.
Detailed proofs for Propositions \ref{proposi_updated_PHD}, \ref{proposi_updated_cd}, and explanations for Proposition \ref{proposi_updated_capping} will be provided in a future journal article.

\section{TBD-PHD Implementation}

\subsection{Prediction}

The multi-target intensity function is initialized as a set of particles, whose location is uniformly distributed within pixels and whose velocity is Gaussian.
It avoids particle clustering when state extraction. Besides, capping is easy to implement.
The intensity function of the $i$-th target $D_i(x)$ has $P_i$ particles $\left\{ {\omega _i^{\left( j \right)},{{x}}_i^{\left( j \right)}} \right\}_{j = 1}^{{P_i}}$, the weight and state of $j$-th particle are $\omega _i^{\left( j \right)}$ and ${{x}}_i^{\left( j \right)}$.
Assuming that the predicted PHD is composed of a $L_{k}$ survival and $J_{k+1}$ newborn particles, then
\begin{equation}
{v_{k + 1|k}}\left( x \right) = \sum\limits_{p = 1}^{{L_{k}} + {J_{k + 1}}} {\omega _{k + 1|k}^{\left( p \right)}\delta \left( {x - {{x}}_{k + 1|k}^{\left( p \right)}} \right)}.
\end{equation}

The spontaneous birth target component is generated from pixels, whose amplitude is greater than the birth threshold. Simultaneously the pixels aren't occupied by particles at the last time step.
Index sets of particles are $L_{k}^S= \left\{ 1, \cdots ,{L_{k}} \right\}$ and $L_{k+1}^B= \left\{ L_{k}\rm{+}1, \cdots, {J_{k+1}} \right\}$.
States of the survival and birth target are sampled from the proposal distribution
\begin{equation}
{x}_{k + 1|k}^{\left( p \right)} \sim \left\{ {\begin{array}{*{20}{l}}
{{q_{k + 1}}\left( { \cdot |{\xi _k^{(p)}},{Z_k}} \right)}&{p \in {L_{k}^S} }\\
{{p_{k + 1}}\left( { \cdot |{Z_k}} \right)}&{p \in {L_{k+1}^B} }
\end{array}} \right.
.
\end{equation}
Particle weight satisfies
\begin{equation}
\omega _{k{\rm{ + }}1|k}^{\left( p \right)} = \left\{ {\begin{array}{*{20}{ll}}
{\frac{{{p_S}\left( {\xi _k^{\left( p \right)}} \right) \cdot {f_{k{\rm{ + }}1|k}}\left( {{x}_{k{\rm{ + }}1|k}^{\left( p \right)}|\xi _k^{\left( p \right)}} \right)}}{{{q_{k{\rm{ + }}1}}\left( {{x}_{k{\rm{ + }}1|k}^{\left( p \right)}|\xi _k^{\left( p \right)},{Z_k}} \right)}}}&{p \in {L_{k}^S} }\\
{\frac{{{\gamma _{k{\rm{ + }}1}}\left( {{x}_{k{\rm{ + }}1}^{\left( p \right)}} \right)}}{{{p_{k{\rm{ + }}1}}\left( {{x}_{k{\rm{ + }}1|k}^{\left( p \right)}|{Z_k}} \right)}}}&{p \in {L_{k+1}^B} }
\end{array}} \right.
.
\end{equation}

\subsection{Updating}

The SMC implementation of updating and capping is
\begin{equation}
\omega _{k+1|k+1,i}^{\left( j \right)} = \frac{{\mathbb{L}\left( {{Z_{k + 1}}|{x}_i^{\left( j \right)}} \right) \cdot \omega _{k + 1|k,i}^{\left( j \right)}}}{{\max \left( {\sum\limits_{j = 1}^{{P_i}} {\mathbb{L}\left( {{Z_{k + 1}}|{x}_i^{\left( j \right)}} \right) \cdot \omega _{k + 1|k,i}^{\left( j \right)}} ,1} \right)}}.
\end{equation}

\subsection{Re-sampling and multi-target state estimation}

Re-sample the particle set $\left\{ {\omega _{k + 1|k + 1}^{\left( j \right)},x_{k + 1|k}^{\left( j \right)}} \right\}_{j = 1}^{{L_{k}} + {J_{k+1}}}$ to obtain $\left\{ {\omega _{k + 1}^{\left( j \right)},x_{k + 1}^{\left( j \right)}} \right\}_{j = 1}^{{L_{k+1}}}$, and extract the state of each target component ${\hat x_{k + 1,i}} = {{\sum\nolimits_{j = 1}^{{P_i}} {\omega _{k + 1,i}^{\left( j \right)}x_{k + 1,i}^{\left( j \right)}} } \mathord{\left/
 {\vphantom {{\sum\nolimits_{j = 1}^{{P_i}} {\omega _{k + 1,i}^{\left( j \right)}x_{k + 1,i}^{\left( j \right)}} } {\sum\nolimits_{\ell  = 1}^{{P_i}} {\omega _{k + 1,i}^{\left( \ell  \right)}} }}} \right.
 \kern-\nulldelimiterspace} {\sum\nolimits_{\ell  = 1}^{{P_i}} {\omega _{k + 1,i}^{\left( \ell  \right)}} }}$.
The estimated number of posterior targets is ${\hat N_{k\rm{+}1}} = \sum_{i = 1}^N {\sum_{j = 1}^{{P_i}} {\omega _{k\rm{+}1,i}^{\left( j \right)}} } $. Mahalanobis distance is used to merge adjacent components.

\section{Simulation}
After the background noise passes through the radar I/Q dual channel, the likelihood function $g\left( {A_k^{\left( i \right)}|{{\cal H}_0}} \right)$ of amplitude of noise from (\ref{equ_noiseLikelihood}) obeys the Rayleigh distribution with parameter $\sigma _n^2$ as
\begin{equation}\label{equ_raylRoise}
{g}\left( A \right) = \frac{A}{{\sigma _n^2}}\exp \left( {\frac{{{-A^2}}}{{2\sigma _n^2}}} \right),\quad A > 0.
\end{equation}
The likelihood function  $g\left( {A_k^{\left( i \right)}|x,{{\cal H}_1}} \right)$ of amplitude of signal from (\ref{equ_signalLikelihood}) obeys Rayleigh distribution with parameter $\sigma _n^2 + \sigma _s^2$ as
\begin{equation}\label{equ_raylSignalPlusRoise}
{g}\left( A \right) = \frac{A}{{\left( {\sigma _n^2 + \sigma _s^2} \right)}}\exp \left( {\frac{{{-A^2}}}{{2\left( {\sigma _n^2 + \sigma _s^2} \right)}}} \right),\  A > 0.
\end{equation}
where ${\rm{SNR}} = 10{\log _{10}}\left( {{{\sigma _s^2} \mathord{\left/
 {\vphantom {{\sigma _s^2} {\sigma _n^2}}} \right.
 \kern-\nulldelimiterspace} {\sigma _n^2}}} \right)$ (dB) is assumed to be known.

\begin{table}[!tbp]
  \centering
  \caption{Target information. Birth time is $t_b$, lasting time is $t_l$, birth weight is $\omega$.}\label{table_initTargets}
    \begin{tabular}{ccccc}
    \toprule
    \multicolumn{1}{l}{target} & state & \multicolumn{1}{l}{$t_b$} & \multicolumn{1}{l}{$t_l$} & \multicolumn{1}{l}{$\omega$} \\
    \midrule
    1     & $[-135,0.9,10,0.4]^{\rm{T}}$    & 1     & 40    & 0.08 \\
    2     & $[-90,0.2,60,0.8]^{\rm{T}}$     & 8     & 33    & 0.08 \\
    3     & $[-45,0.8,20,2.0]^{\rm{T}}$     & 8     & 38    & 0.08 \\
    4     & $[90,100,-1.4,-0.4]^{\rm{T}}$   & 16    & 17    & 0.08 \\
    5     & $[45,-0.3,20,1.6]^{\rm{T}}$     & 16    & 17    & 0.08 \\
    6     & $[0,-0.6,180,-1.2]^{\rm{T}}$    & 24    & 17    & 0.08 \\
    7     & $[135,-0.2,10,3.0]^{\rm{T}}$    & 24    & 9     & 0.08 \\
    8     & $[-90,0.6,140,-1.2]^{\rm{T}}$   & 16    & 25    & 0.08 \\
    \bottomrule
    \end{tabular}%
  \label{tab:addlabel}%
\end{table}%

Additionally, ${\bf{I}_n}$ is $n$-dimension identity matrix.
The transition density is ${f_{k + 1|k}}\left( { \cdot |\xi } \right)\sim {\cal N}\left( { \cdot |{\bf{F}}\xi ,{{\bf{Q}}_{k + 1}}} \right)$, and
\begin{equation}
{\bf{F}} = {{\bf{I}}_2} \otimes \left( {\begin{array}{*{20}{l}}
1&\tau \\
0&1
\end{array}} \right),\quad {\bf{Q}} = q{{\bf{I}}_2} \otimes \left( {\begin{array}{*{20}{c}}
{{\tau ^4}/4}&{{\tau ^3}/2}\\
{{\tau ^3}/2}&{\tau^2}
\end{array}} \right),
\end{equation}
where $\otimes$ is Kronecker product.
Set sampling time $\tau  = 1$, maneuver parameters $q = 8.1 \times {10^{ - 3}}$, ${p_S} = 0.99$, ${P_i} = 250$, pruning threshold ${\Gamma _p} = 4 \times {10^{ - 3}}$, merge threshold ${\Gamma _m} = 4$, birth threshold ${\Gamma _b} = 6.4$, resolution of range and bearing $r = 2.5{\rm{m}}$, $\theta  = {3^\circ }$, FOV $R = \left[ {0,200} \right]\left( \rm{m} \right),\Theta  = \left[ {{0^\circ },{{180}^\circ }} \right]$, OSPA parameter $c = 8,p = 2$ \cite{Schuhmacher2008OSPA}.
Set SNR$=12, 18$dB, ${\sigma _n} = 1.5$, ${\sigma _s} = \left[ {6,12} \right]$. Run MC$=200$ simulations.

\begin{figure}[!tbp]
  \centering
  \subfloat[\label{fig12dBin2Filters}]{
		\includegraphics[width=0.245\textwidth]{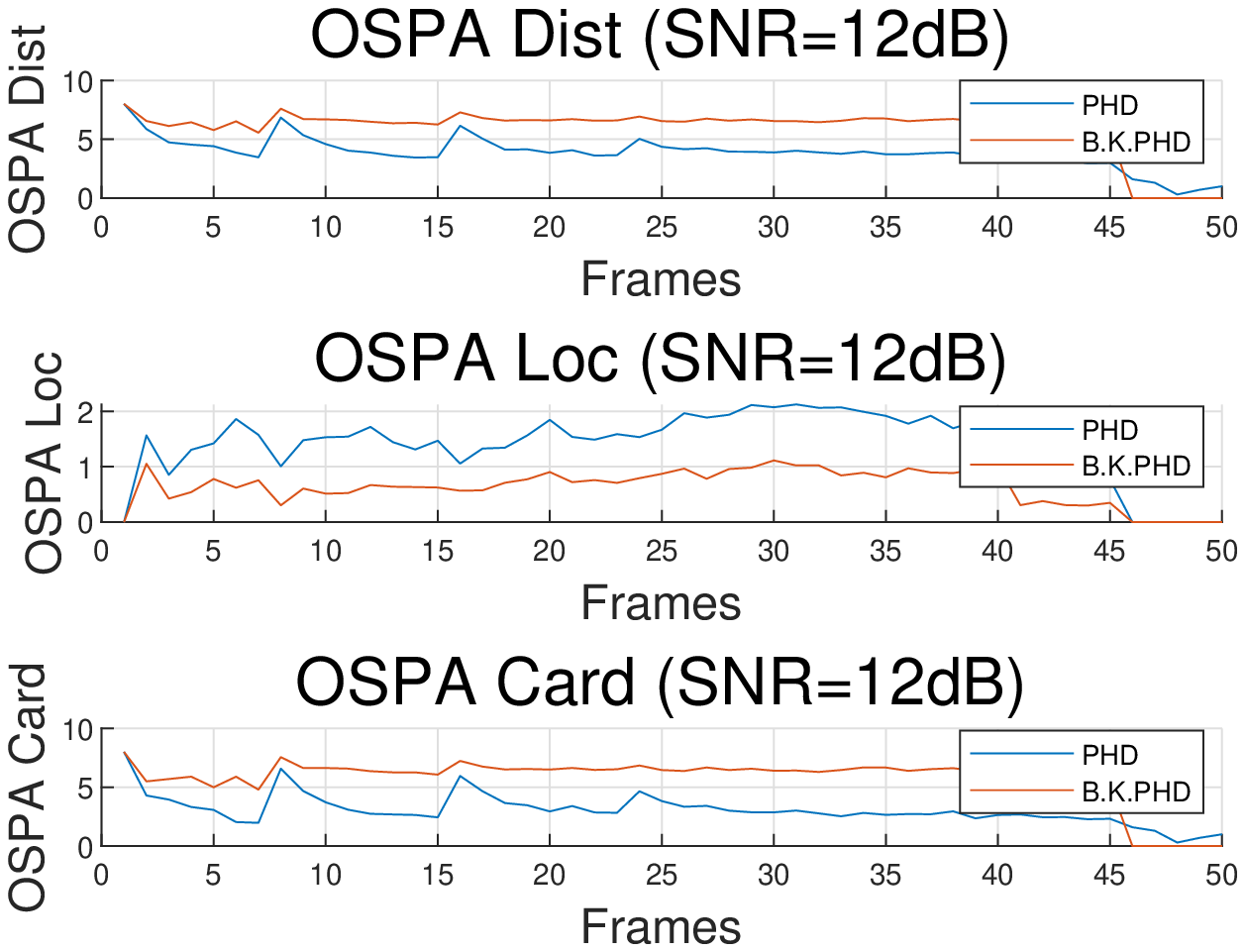}}
  \subfloat[\label{fig18dBin2Filters}]{
		\includegraphics[width=0.245\textwidth]{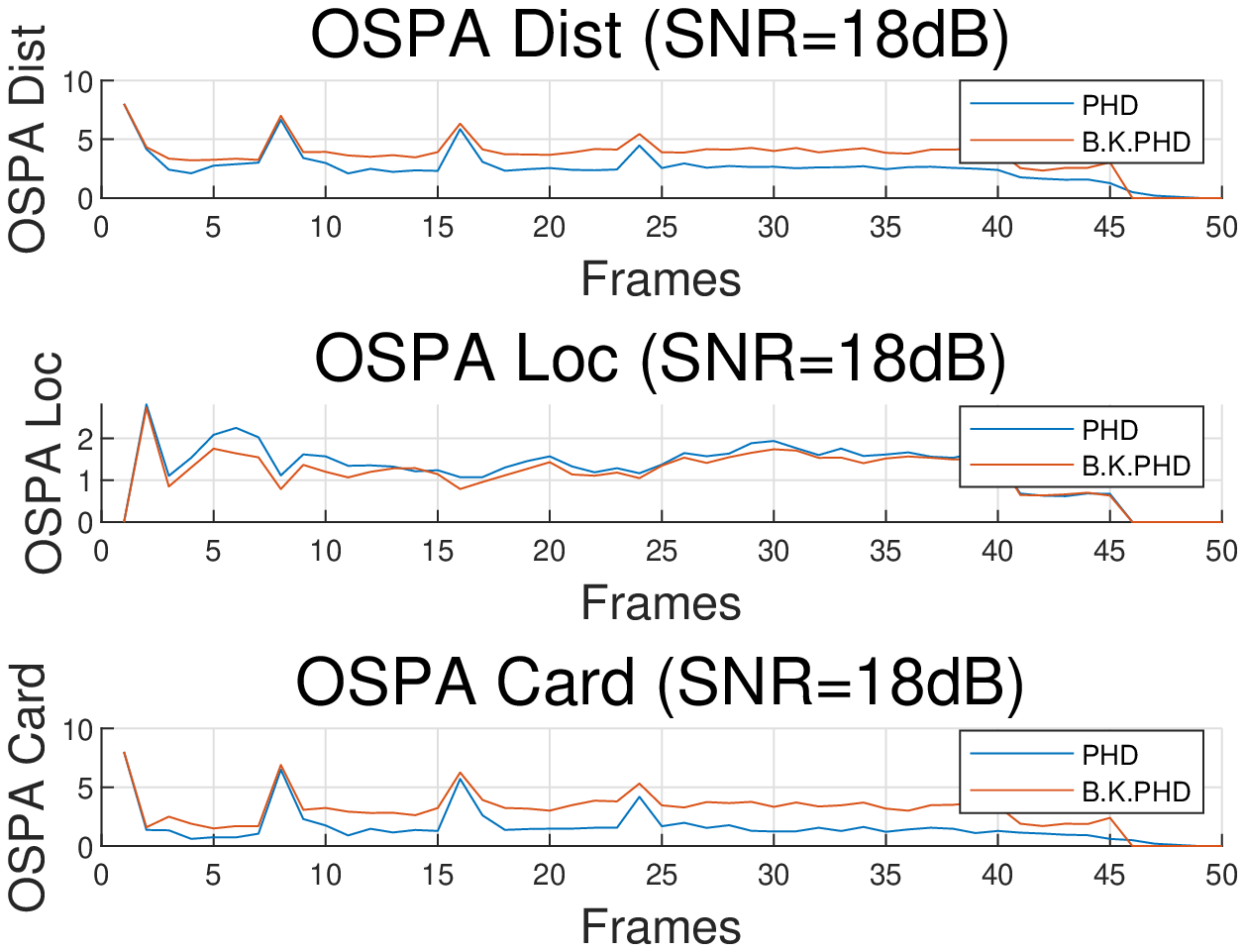}}
  \caption{OSPA analysis of the newly proposed PHD filter and B.K.PHD filter at different SNRs. (a) SNR=12dB, (b) SNR=18dB.} \label{fig1218dBin2Filters}

  \subfloat[\label{figSubVisual12dB_PHD}]{
		\includegraphics[width=0.23\textwidth]{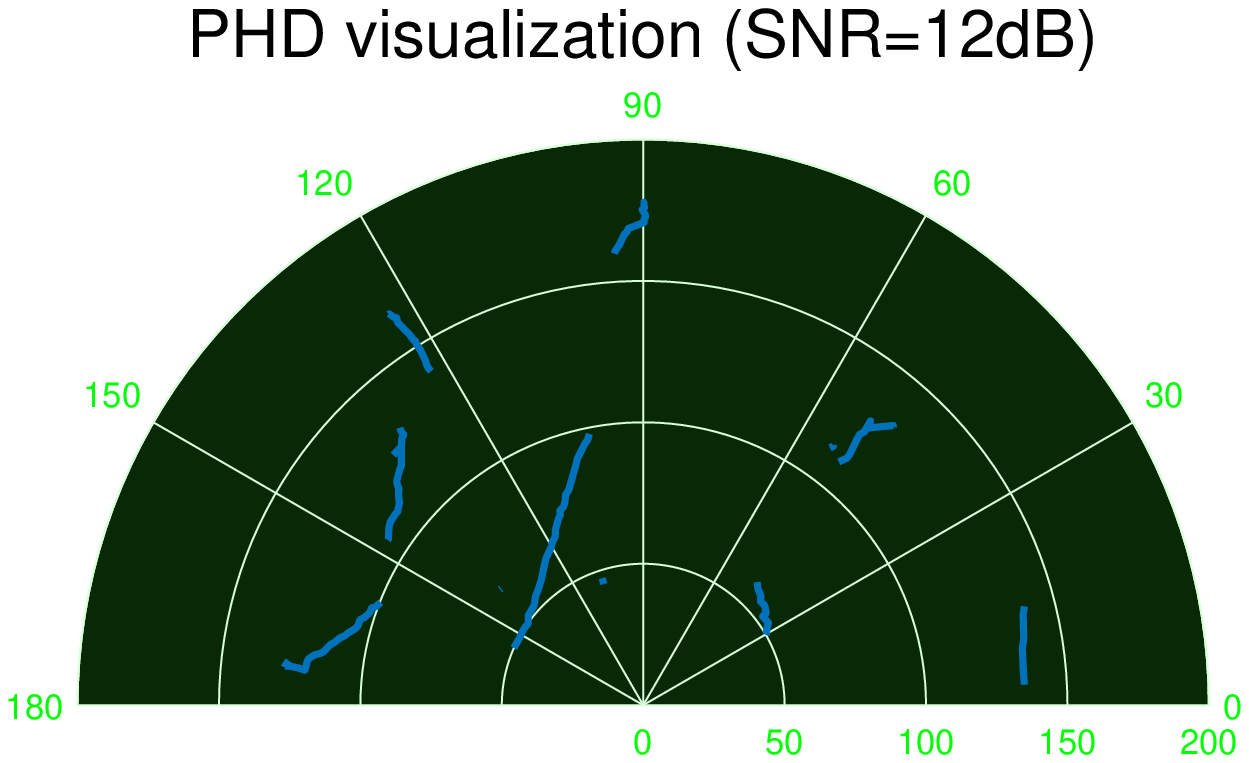}}
  \subfloat[\label{figSubVisual12dB_BKPHD}]{
		\includegraphics[width=0.23\textwidth]{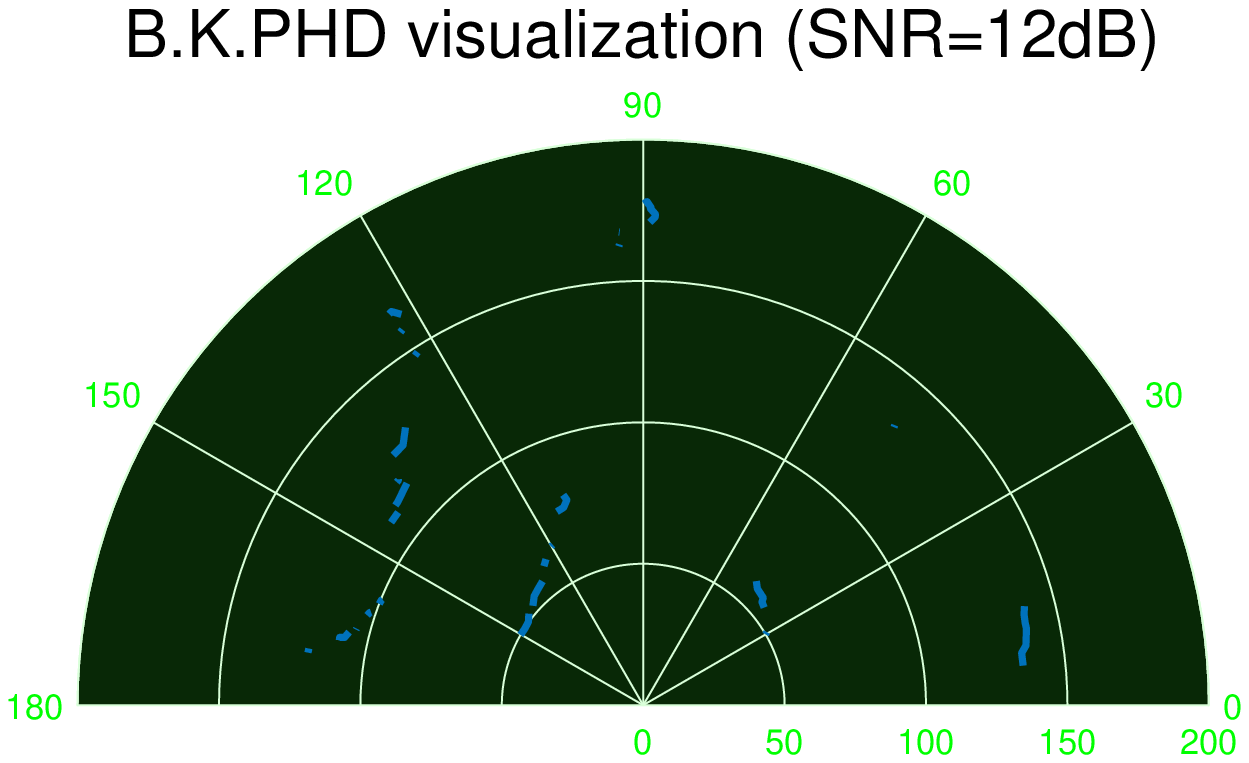}}
  \caption{Tracking visualization of 2 filters at SNR=12dB: (a) the newly proposed PHD filter, (b) the B.K.PHD filter.} \label{figVisual2filters}
\end{figure}


\begin{figure}[!hbtp]
    \begin{minipage}[h]{0.48\textwidth}
    \includegraphics[width=1.0\textwidth]{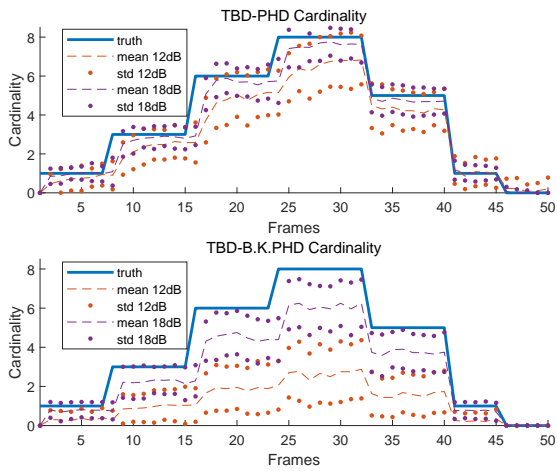}
    \caption{Mean and standard deviation (std) of estimated number by 2 filters.}\label{fig_cd}
    \end{minipage}

\end{figure}

The proposed TBD-PHD filter is contrasted with the B.K.PHD filter.
Because B.K.PHD filter represents a family of TBD-PHD filters that the DBT updating equations are mechanically applied to.
Initial information of targets is shown in Table \ref{table_initTargets}.
OSPA of both filters in SNR=12dB or SNR=18dB is shown in Fig. \ref{fig1218dBin2Filters}.
The tracking result is shown in Fig. \ref{figVisual2filters}.
The mean and std of the estimated number are shown in Fig. \ref{fig_cd}.

Based on OSPA results, the proposed TBD-PHD completely outperforms the B.K.PHD filter.
The location error of the B.K.PHD filter is sometimes lower than that of the novel TBD-PHD filter.
That is because the B.K.PHD filter has significant missed detection.
This results in the mismatch between true targets and estimated targets in the Hungarian assignment.
Thus, the low location error of the B.K.PHD filter do not result from high location accuracy, but the matching failure caused by missing detection.

\section{Conclusion}
Based on KLD minimization, FISST theory and strict Bayes criterion, an analytical and principled closed-form solution of TBD-PHD filter is derived in this paper.
Moreover, Poisson conjugate prior for TBD-PHD filter is proven.
Furthermore, the capping operation is devised, which doesn't change the properties of the conjugate prior.
Finally, the SMC implementation is designed and outperforms previous TBD-PHD filters, whose models and assumptions mismatch.

\bibliographystyle{IEEEtran}
\bibliography{TBD_PHD}
%
%
%

\end{document}